\newcolumntype{R}[1]{>{\RaggedRight\arraybackslash}p{#1}}
\newtheorem{lem}{Lemma}
\newtheorem{thm}{Theorem}
\newtheorem{defn}{Definition}
\newtheorem{example}{Example}
\newtheorem{cexample}{Canonical Example}
\newtheorem{propty}{Property}
\newtheorem{rem}{Remark}
\newtheorem{candmeas}{Candidate Measure}
\newcommand{\iid}[0]{i.i.d.}
\newcommand{\uni}[2]{\mathrm{Uni}({#1: #2})}
\newcommand{\red}[2]{\mathrm{Red}({#1: #2})}
\newcommand{\syn}[2]{\mathrm{Syn}({#1: #2})}
\newcommand{\mut}[2]{\mathrm{I}({#1; #2})}
\newcommand{\given}[0]{\mid}
\def\thanks#1{\protected@xdef\@thanks{\@thanks
        \protect\footnotetext{#1}}}
\begin{document}
\date{}
\title{Quantifying Feature Contributions to Overall Disparity Using Information Theory}
\author{
Sanghamitra Dutta, Praveen Venkatesh, Pulkit Grover \\  
\thanks{Presented at the AAAI-22 Workshop on Information-Theoretic Methods for Causal Inference and Discovery in March 2022.}%
\thanks{All the authors were affiliated with the Department of Electrical and Computer Engineering at Carnegie Mellon University. S. Dutta is currently a research scientist at JP Morgan AI Research. P. Venkatesh is currently a scientist at the Allen Institute and a postdoctoral scholar at the University of Washington, Seattle. The authors acknowledge the support of NSF, CMU Cylab, and the Simons Institute. Author Contacts: S. Dutta (sanghamitra2612@gmail.com), P. Venkatesh (praveen.venkatesh@alleninstitute.org), P. Grover (pgrover@andrew.cmu.edu).}
}

\maketitle

\begin{abstract}
When a machine-learning algorithm makes biased decisions, it can be helpful to understand the ``sources'' of disparity to explain why the bias exists. Towards this, we examine the problem of quantifying the contribution of each individual feature to the observed disparity. If we have access to the decision-making model, one potential approach (inspired from intervention-based approaches in explainability literature) is to vary each individual feature (while keeping the others fixed), and use the resulting change in disparity to quantify its contribution. However, we may not have access to the model or be able to test/audit its outputs for individually varying features. Furthermore, the decision may not always be a deterministic function of the input features (e.g., with human-in-the-loop). For these situations, we might need to explain contributions using purely distributional (i.e., observational) techniques, rather than interventional. We ask the question: \emph{what is the ``potential'' contribution of each individual feature to the observed disparity in the decisions when the exact decision-making mechanism is not accessible?} We first provide canonical examples (thought experiments) that help illustrate the difference between distributional and interventional approaches to explaining contributions, and when either is better suited. When unable to intervene on the inputs, we quantify the ``redundant'' statistical dependency about the protected attribute that is present in both the final decision and an individual feature, by leveraging a body of work in information theory called Partial Information Decomposition. We also perform a simple case study to show how this technique could be applied to quantify contributions.
\end{abstract}

\section{INTRODUCTION}
\label{sec:introduction}

Machine learning algorithms have permeated almost every aspect of our lives, including high-stakes applications, e.g. hiring and admissions. With their growing use, it has become increasingly important to incorporate algorithmic fairness to avoid bias with respect to protected attributes, such as, gender, race, nationality, age, etc. Existing literature on fairness~\cite{dwork2012fairness,kamiran2013quantifying,agarwal2018reductions,hardt2016equality, varshney2019trustworthy,calmon2017optimized, dutta2020tradeoff, fairMI,wang2021split,datta2017use,kilbertus2017avoiding,zhang2018fairness,corbett2017,nabi2018fair,chiappa2018path,interventional_fairness,dutta2020information,dutta2021fairness,xu2020algorithmic,chen2019fairness} provides several measures of bias and disparity at the final model output, as well as, several techniques to mitigate these biases during model design.

However, in many applications, e.g., college admissions, the decision-making mechanism is a complex combination of algorithms and human-in-the-loop. Thus, only identifying bias and disparity in the final decision may not be enough to audit and, subsequently, mitigate them. E.g., there is an ongoing debate in the US on whether GRE/TOEFL scores should be used for college admissions because they may cause disparity in the decisions with respect to protected attributes~\cite{gre_ets, gre_atlantic}. It would help to understand \textit{how} the disparity in the decisions arose, e.g., which features could be potentially responsible for the disparity, and then evaluate how critical those features are for the specific application. In fact, several existing discrimination laws (e.g. Title VII of the US Civil Rights Act~\cite{title7, barocas2016big}) allow exemptions if the disparity can be justified by an occupational necessity, e.g., coding test for software engineers, or weightlifting ability for  firefighters~\cite{kamiran2013quantifying,grover1995business,corbett2017,datta2017use,dutta2020information,dutta2021fairness,kilbertus2017avoiding,zhang2018fairness,nabi2018fair,chiappa2018path,interventional_fairness}. 

The problem of quantifying contribution of different features to the overall disparity bridges the fields of both fairness and explainability. Explanations often provide an understanding (e.g. through visualization) of the contribution of each individual feature to the final decision~\cite{datta2016algorithmic,lundberg2017unified,ribeiro2016should}. Here we are interested in quantifying the contribution of features to the observed disparity in the decision making. 

One possible approach inspired from existing techniques in explainability (e.g., QII~\cite{datta2016algorithmic}, Influence Functions~\cite{koh2017understanding}, SHAP~\cite{lundberg2017unified,ghorbani2020neuron}) could be to intervene on the input features and observe changes in the output. For instance, one can vary an input feature while keeping other features unchanged, and assign the resulting change in disparity in the final output to this feature to quantify its contribution to the disparity. However, these techniques are not a good fit when one does not have access to the model to be able to intervene on its inputs.

Furthermore, in many applications, e.g., admissions, the machine-learnt algorithm might be used in conjunction with human-in-the-loop. This introduces an additional challenge in quantifying contributions of individual features: the output is no longer a deterministic function of the input features. The final decision may not be the same for two candidates having the same values of input features as additional, non-quantified aspects are taken into consideration.

In this work, the question we ask is: what is the potential contribution of each feature to the observed disparity when the exact decision-making mechanism is not accessible? To answer this question, we introduce two perspectives to this problem: (i) an interventional perspective; and (ii) a purely distributional perspective (i.e., observational). We then demonstrate when one is better suited than the other. We call our first approach \emph{interventional} because it requires one to intervene on specific input features and observe the final outputs, thereby changing their joint distribution. We call our second approach \emph{distributional/observational} because it only depends on the joint distribution of the observed input features and the final output, and does not require access to the model to be able to intervene on specific input features.



Our work makes the following contributions:

\noindent \textbf{1. Explaining Contribution of Individual Features to the Observed Disparity:}
We first quantify the observed disparity as the mutual information~\cite{cover2012elements} $\mut{Z}{\hat{Y}}$ where $Z$ is the protected attribute, e.g., gender, race, etc., and $\hat{Y}$ is the final decision. Then, we introduce an \emph{interventional} approach (see Candidate Measure~\ref{candmeas:interventional} in Section~\ref{sec:main_results}) to quantify the contribution of each individual feature to the observed disparity. This approach is inspired from existing works~\cite{datta2016algorithmic,lundberg2017unified} in explainability, and can be used when one has access to the model making the decisions, and is able to intervene on the inputs to observe the outputs (decisions). However, for situations when access to the model is either not available or the decisions are not deterministic given the input features, we also introduce a \emph{distributional} approach (see Candidate Measure~\ref{candmeas:distributional} in Section~\ref{sec:main_results}). We provide a measure to quantify the ``potential'' contribution of each individual feature to the decision. This technique leverages a body of work in information theory called Partial Information Decomposition (PID) that quantifies the ``redundant'' information about the protected attribute $Z$ present both in $\hat{Y}$ and an input feature.

\begin{rem}
We use the term ``potential'' because even if a feature has a non-zero potential contribution (by our definition), one is unable to check if intervening on this feature actually results in a change in disparity in the final decision. Being unable to intervene on the model inputs, we essentially capture the ``redundant'' statistical dependency (with the protected attribute) that is present in both an individual feature and the final decision. Thus, in spirit, this may seem similar to capturing statistical correlations with the input features and the output, as discussed in explainability literature~\cite{Molnar2019Interpretable}. However, note that, our quantification involves three random variables: we want to capture the statistical dependency \emph{about the protected attribute} that is present in both \emph{the final decision} and \emph{an input feature}. Statistical correlation is only defined for two random variables. This leads us to examine ``redundant'' information that precisely quantifies this dependency.
\end{rem}

\noindent \textbf{2. Canonical Examples to Illustrate the Difference Between Various Explainability Approaches:}
We also discuss several canonical examples in Section~\ref{sec:main_results} that illustrate the differences between an \emph{interventional} and a \emph{distributional} approach to explaining the contribution of each individual feature. These examples also help us understand when one approach is better suited than the other.

\noindent \textbf{3. Case Study on an artificial admissions dataset:} We finally demonstrate a case study on an artificial graduate admissions dataset to demonstrate how these techniques could be used to explain contributions.

\textbf{Related Works:} Algorithmic fairness is an important field of research with several measures of fairness as well as approaches to incorporate them in model design. The most closely related works to our work are \cite{kamiran2013quantifying,zhang2018fairness,corbett2017,nabi2018fair,chiappa2018path,interventional_fairness,dutta2020information,dutta2021fairness,galhotra2020fair,khodadadian2021information,xu2020algorithmic}. However, \cite{kamiran2013quantifying,zhang2018fairness,corbett2017,nabi2018fair,chiappa2018path,interventional_fairness,dutta2020information,dutta2021fairness,xu2020algorithmic} focus on quantifying exempt and non-exempt discrimination (either using observational measures or causal modelling) given a choice of critical features, rather than quantifying the contribution of each individual feature. Alternatively, \cite{galhotra2020fair,khodadadian2021information} propose information-theoretic techniques to carefully select features for fair decision making. In this work, we focus on explaining the contributions of all the individual features to the overall disparity, even when we may not have access to the exact decision-making mechanism (including scenarios with human-in-the-loop). In \cite{pan2021explaining}, the focus is on leveraging the underlying causal graph to understand feature contributions to disparity. Another recent related work~\cite{ge2022explainable} focus on counterfactual explainability techniques for fairness.

This work is also connected with the literature on explainability~\cite{datta2016algorithmic,lundberg2017unified,Olah2018Building,ghorbani2020neuron,Sundararajan2017Axiomatic,Dabkowski2017Real,Bhatt2020Evaluating,koh2017understanding,Kim2016Examples,harutyunyan2021estimating,venkatesh2020information,student_risk,adler2018auditing,kumar2020problems} (see \cite{Molnar2019Interpretable} for a survey). In particular, \cite{lundberg2017unified,ghorbani2020neuron}  use Shapley values, but their goal is to quantify the contribution of individual features to the \textit{decision} (or, its accuracy), and propose several approximations for the ease of computation. Here, our goal is not to explain the contribution of individual features to the \textit{decision} (or, its accuracy), but rather their contribution to the overall \textit{disparity in the decision}. We note that \cite{ghorbani2020neuron} also extend their Shapley-value-based explainability technique for an application in fairness: quantifying contributions of neurons to the accuracy only on a protected group, e.g., people of a certain gender or race. In contrast, here we are interested in quantifying contribution of different features to the mutual information $\mut{Z}{\hat{Y}}$ (statistical dependence between protected attribute and final output). In this context, our work brings out the contrast between interventional and distributional approaches to explainability through canonical examples, and illustrates when one is better suited than the other. In doing so, our work leverages tools from Partial Information Decomposition, that also can have broad applications in explainability.


\begin{figure}
\centering
\includegraphics[height=3.5cm]{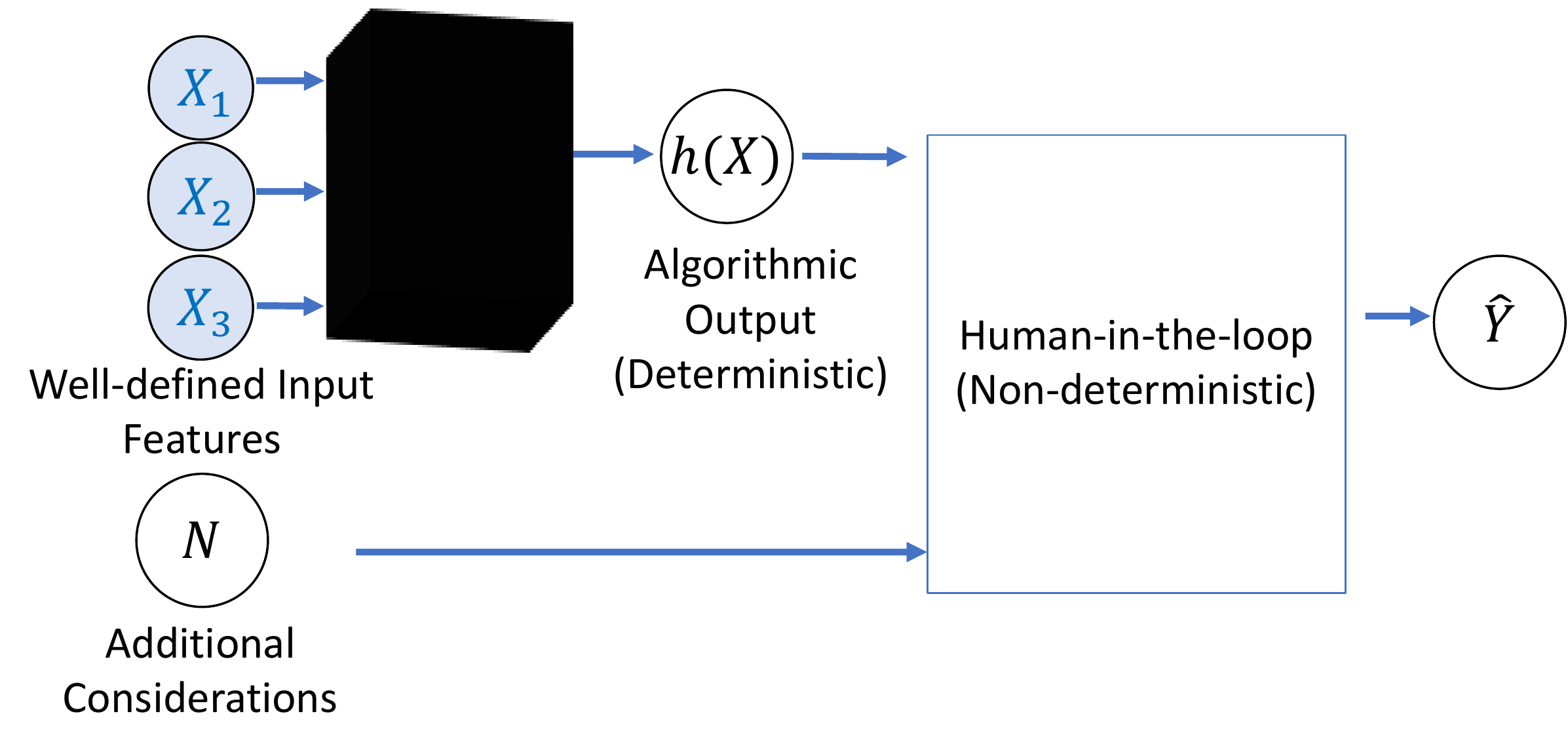}
\caption{\small{Illustration of our setup: Here $X=(X_1,X_2,\ldots,X_n)$ denotes a set of well-defined input features that are input to a (deterministic) model $h(X)$. The final decision $\hat{Y}$ combines $h(X)$ and additional considerations (which usually includes the protected attribute $Z$). Thus, $\hat{Y}$ is not necessarily a deterministic function of $X$. \label{fig:system} }}
\end{figure}

\section{PRELIMINARIES}

\subsection{Our Notations and System Model}
\label{system_model}
We let $X=(X_1,X_2,\ldots,X_n)$ denote a set of well-defined input features that go into a model. The output of the model is denoted by $h(X)$ which is usually a deterministic function of the model inputs. We also let $Z$ denote the protected attribute, e.g., gender, race, etc., which may or may not be an input feature explicitly fed into the model ($h(X)$). The final decision is denoted by $\hat{Y}$ which is a complex combination of the deterministic model output $h(X)$ and subjective evaluation by human-in-the-loop who may take additional factors (non-quantified aspects) into consideration. These additional factors almost always include the protected attribute $Z$, e.g., gender, race, age, etc. Therefore, the final decision $\hat{Y}$ may not be a deterministic function of the model inputs $X$, and could also depend on $Z$ (see Fig.~\ref{fig:system}).

Next, we provide a brief background on Partial Information Decomposition (PID) in Section~\ref{background:pid} and Shapley Values in Section~\ref{background:shapley} for completeness. Readers familiar with these concepts may directly proceed to our main results in Section~\ref{sec:main_results}, which is followed by a case study on an artificial admissions dataset in Section~\ref{sec:case_study}.

\subsection{Background on Partial Information Decomposition (PID)}
\label{background:pid}

PID~\cite{bertschinger2014quantifying,williams2010nonnegative,griffith2014quantifying} is an emerging body of work in information theory that decomposes the mutual information $\mut{Z}{(A,B)}$ about a random variable $Z$ contained in the tuple $(A,B)$ into four \emph{non-negative} terms (also see Fig.~\ref{fig:pid}):
\begin{align}
 & \mut{Z}{(A,B)} = \uni{Z}{A| B} + \uni{Z}{B| A} \nonumber\\
 & + \red{Z}{(A, B)} + \syn{Z}{(A, B)}. \label{eq:pid1}
\end{align}
Here, $\uni{Z}{A| B}$ denotes the unique information about $Z$ that is present only in $A$ and not in $B$. Similarly, $\uni{Z}{B| A}$ is the unique information about $Z$ that is present only in $B$ and not in $A$. The term $\red{Z}{(A, B)}$ denotes  the redundant information about $Z$ that is present in both $A$ and $B$, and $\syn{Z}{(A, B)}$ denotes the synergistic information not present in either of $A$ or $B$ individually, but present jointly in $(A,B)$. \emph{All four of these terms are non-negative. Also notice that, $\red{Z}{(A, B)}$ and $\syn{Z}{(A, B)}$ are symmetric in $A$ and $B$.}  Before defining these PID terms formally, let us understand them through an intuitive example.
\begin{figure}
\centering
\includegraphics[height=3cm]{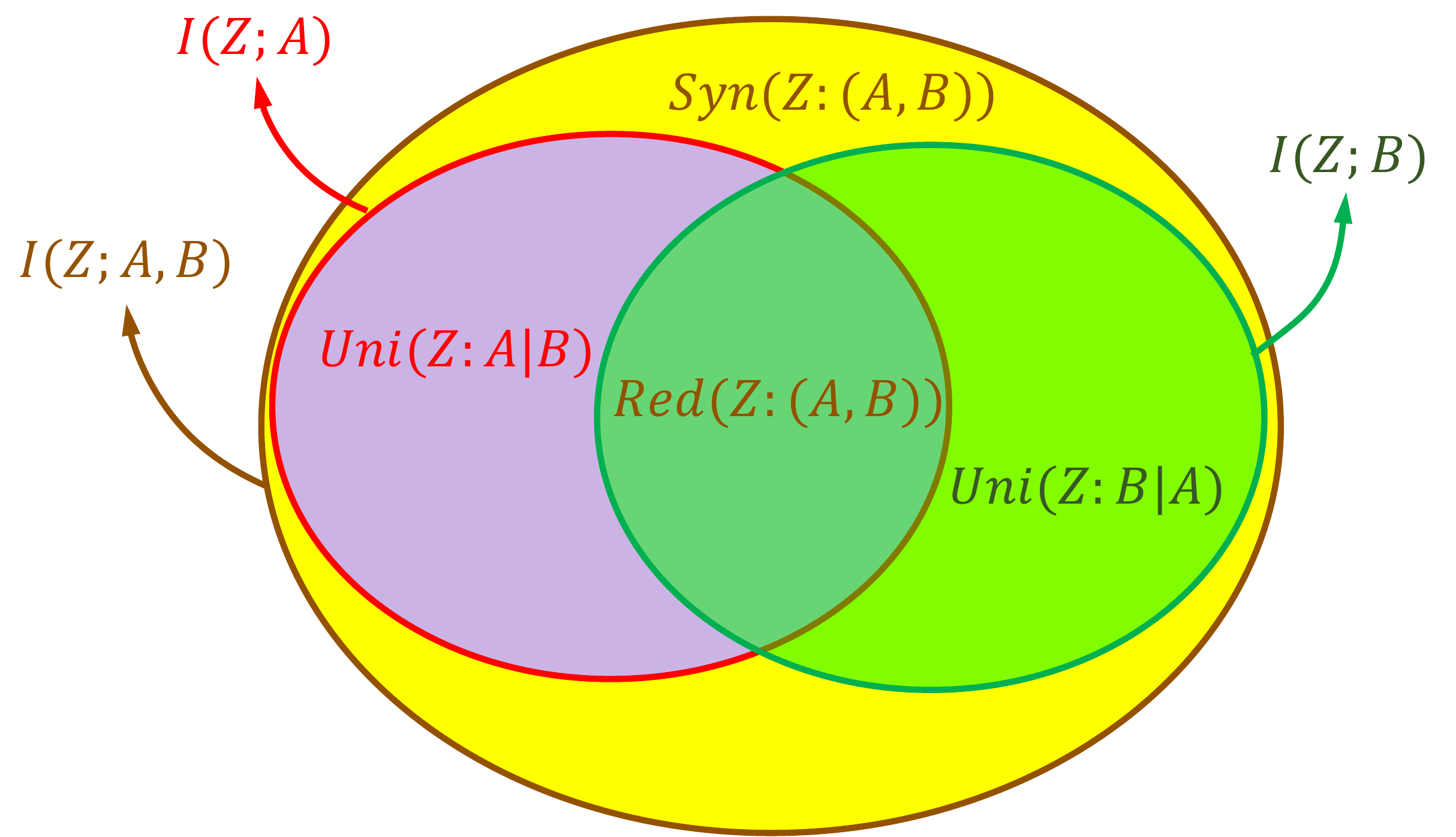}
\caption{\small{PID: Mutual information $\mathrm{I}(Z;(A,B))$ is decomposed into $4$ non-negative terms, namely, $\uni{Z}{A| B}$, $\uni{Z}{B| A}$, $\red{Z}{(A, B)}$ and $\syn{Z} {(A, B)}$. Also note that, $\mathrm{I}(Z;(A,B))=\mathrm{I}(Z;B)+\mathrm{I}(Z;A\given B),$ each of which is in turn a sum of two PID terms. $\red{Z}{(A, B)}$ is the sub-volume between $\mathrm{I}(Z;A)$ and $\mathrm{I}(Z;B)$, and $\uni{Z}{A| B}$ is the sub-volume between $\mathrm{I}(Z;A\given B)$ and $\mathrm{I}(Z;A)$.  \label{fig:pid} }}
\end{figure}
\begin{example}[Understanding PID]
Let $Z=(Z_1,Z_2,Z_3)$ with $Z_1,Z_2,Z_3\sim$ \iid{} Bern(\nicefrac{1}{2}). Let $A=(Z_1,Z_2,Z_3\oplus N)$, $B=(Z_2,N)$, $N\sim $  Bern(\nicefrac{1}{2}) is independent of $Z$. Here,  $\mathrm{I}(Z; (A,B))=3$ bits. 
\end{example}
\noindent The unique information about $Z$ that is contained only in $A$ and not in $B$ is effectively contained in $Z_1$ and is given by $\uni{Z}{A| B} = \mut{Z}{Z_1} = 1$ bit. The redundant information about $Z$ that is contained in both $A$ and $B$ is effectively contained in $Z_2$ and is given by $\mathrm{Red}(Z: (A, B))=\mathrm{I}(Z;Z_2)=1$ bit. Lastly, the synergistic information about $Z$ that is not contained in either $A$ or $B$ alone, but is contained in both of them together is effectively contained in the tuple $(Z_3\oplus N,N)$, and is given by $\syn{Z}{(A,B)} = \mut{Z}{(Z_3\oplus N,N)}=1 $ bit. This accounts for the $3$ bits in $\mut{Z}{ (A,B)}$. Here, $B$ does not have any unique information about $Z$ that is not contained in $A$, i.e., $\uni{Z}{B| A} =0.$

Irrespective of the formal definition of the individual PID terms, the following identities also hold:
\begin{align}
&\mut{Z}{A}=\uni{Z}{A| B} + \red{Z}{(A, B)}. \label{eq:pid2}\\
&\mut{Z}{A \given B}=\uni{Z}{A| B} + \syn{Z}{(A, B)}.  \label{eq:pid3} 
\end{align}

Note that, $\uni{Z}{A| B}$ can be viewed as the information-theoretic sub-volume of the intersection between $\mut{Z}{A}$ and $\mut{Z}{A \given B}$. Similarly, $\red{Z}{(A, B)}$ is the sub-volume between $\mut{Z}{A}$ and $\mut{Z}{B}$.

These equations also demonstrate that $\uni{Z}{A| B}$ and  $\red{Z}{(A, B)}$ are the information contents that exhibit themselves in $\mut{Z}{A}$ which is the statistically visible information content about $Z$ present in $A$. Because both of these PID terms are non-negative, if any one of them is non-zero, we will have $\mut{Z}{A}>0$. Similarly, $\uni{Z}{B| A}$ and  $\red{Z}{(A, B)}$ also exhibit themselves in $\mut{Z}{B}$. On the other hand, $\syn{Z}{(A, B)}$ is the information content that does not exhibit itself in $\mut{Z}{A}$ or $\mut{Z}{B}$ individually, i.e., these terms can still be $0$ even if $\syn{Z}{(A, B)}>0$. But, $\syn{Z}{(A, B)}$ exhibits itself in $\mut{Z}{(A, B)}$. Also note that,
\begin{align}
\mut{Z}{(A, B)} &= \underbrace{\uni{Z}{A| B}  + \red{Z}{(A, B)}}_{\mut{Z}{A}} \nonumber \\
& + \underbrace{\uni{Z}{B| A} + \syn{Z}{(A, B)}}_{\mut{Z}{B\given A}} \\
& =\underbrace{\uni{Z}{B| A}  + \red{Z}{(A, B)}}_{\mut{Z}{B}} \nonumber \\
&+ \underbrace{\uni{Z}{A| B} + \syn{Z}{(A, B)}}_{\mut{Z}{A\given B}}.
\end{align}

Given three independent equations \eqref{eq:pid1}, \eqref{eq:pid2} and \eqref{eq:pid3} in four unknowns (the four PID terms), defining any one of the terms (e.g., $\uni{Z}{A| B}$) is sufficient to obtain the other three. For completeness, we include the definition of unique information from \cite{bertschinger2014quantifying} (that also allows for estimation via convex optimization~\cite{banerjee2018computing}) with the specific properties used in the proofs in the Appendix. To follow the paper, only an intuitive understanding (from Fig.~\ref{fig:pid}) may be sufficient.
\begin{defn}[Unique Information~\cite{bertschinger2014quantifying}] Let $\Delta$ be the set of all joint distributions on $(Z,A,B)$ and $\Delta_p$ be the set of joint distributions with the same marginals on $(Z,A)$ and $(Z,B)$ as their true distribution, \textit{i.e.}, $$\Delta_p=\{ Q \in \Delta : q(z,a){=}\Pr(Z{=}z,A{=}a) \text{ and } q(z,b){=}\Pr(Z{=}z,B{=}b) \}.$$ 
\begin{align}
\uni{Z}{A| B}=\min_{Q \in \Delta_p} \mathrm{I}_{Q}(Z;A\given B),
\end{align}
where $\mathrm{I}_{Q}(Z;A\given B)$ is the conditional mutual information when $(Z,A,B)$ have joint distribution $Q$. \label{defn:uni} 
\end{defn} 
\noindent The key intuition behind this definition is that the unique information should only depend on the marginal distribution of the pairs $(Z, A)$ and $(Z, B)$. This is motivated from an \textbf{operational} perspective that if $A$ has unique information about $Z$ (with respect to $B$), then there must be a situation where one can predict $Z$ better using $A$ than $B$ (more details in \cite[Section 2]{bertschinger2014quantifying}). Therefore, all the joint distributions in the set $\Delta_p$ with the same marginals essentially have the same unique information, and the distribution $Q^*$ that minimizes $\mathrm{I}_{Q}(Z;A\given B)$ is the joint distribution that has no synergistic information leading to $\mathrm{I}_{Q^*}(Z;A\given B)= \uni{Z}{A| B}$. Definition~\ref{defn:uni} also defines $\red{Z}{(A, B)} $  and $\syn{Z} {(A, B)}$ using \eqref{eq:pid2} and \eqref{eq:pid3}.

\begin{defn}[Redundant Information~\cite{bertschinger2014quantifying}] \label{defn:red} The redundant information about $Z$ contained in both $A$ and $B$ is given by: 
\begin{align}
\red{Z}{(A,B)}=\mut{Z}{A}-\uni{Z}{A\given B}.
\end{align} 
\end{defn}

\subsection{Background on Shapley Values}
\label{background:shapley}
Here, we provide a background on Shapley values~\cite{shapley1953}, a concept from cooperative game theory, that we use to arrive at our techniques of explainability. The Shapley value is a way of distributing a total reward generated by the coalition of all players. The Shapley values are a unique set of values satisfying certain assumptions~\cite{shapley1953}.

Consider a set $\mathcal{N}$ (of $|\mathcal{N}|=n$ total players) and a function $v(S)$ that maps a subset of players $S\subseteq \mathcal{N}$ to a real number. Here, $ v(\emptyset )=0$, where $ \emptyset $  denotes the empty set, and $v(\mathcal{N})$ is the total reward generated by the coalition of all players. In general, $v(S)$ is called the worth of coalition $S$, that describes the total expected reward that the members of $S$ can obtain by cooperation. The function $v$ is called a characteristic function.

According to the Shapley values, the  reward-amount $\varphi_{i}(v)$ that player $i$ is allocated in a coalition game specified by $(v,\mathcal{N})$ is:
$\varphi_{i}(v)=\sum_{S\subseteq \mathcal{N}\setminus \{i\}}\frac {|S|!\;(n-|S|-1)!}{n!} \big(v(S\cup \{i\})-v(S)\big).$
Note that the sum extends over all subsets $S$ of $\mathcal{N}$ not containing player $i$. 

The Shapley values satisfy many desirable properties, of which we include the most relevant one here.
\begin{propty}
The sum of the Shapley values of all players equals the total reward, i.e.,
$\sum _{i\in \mathcal{N} }\varphi_{i}(v)=v(\mathcal{N}).$
\label{propty:shapley}
\end{propty}

\section{MAIN RESULTS}
\label{sec:main_results}

This section is organized as follows: we begin (Section~\ref{subsec:candidate_measures}) with a canonical example that illustrates the difference between \emph{interventional} and \emph{distributional} approaches to explaining contributions of individual features to the observed disparity. Next, in Section~\ref{subsec:property}, we discuss some properties of our proposed \emph{distributional} measure of contribution to disparity, focusing on the case when one does not have access to the underlying decision-making mechanism, e.g., with human-in-the-loop. In Section~\ref{subsec:more_examples}, we discuss some more canonical examples to further illustrate the differences between our two proposed approaches.

\subsection{Interventional and Distributional Approaches to Explaining Contribution}
\label{subsec:candidate_measures}

We consider the following canonical example that represents scenarios where two or more features are highly correlated with each other.

\begin{cexample}[Highly Dependent Input Features]
Consider two highly dependent features\footnote{Often, such features are regarded as co-linear features.} being used for deciding student admissions. These two features are the scores in two courses: $X_1=Z+U$ and $X_2=Z+U$. Here $Z$ is the protected attribute, distributed as Bern(\nicefrac{1}{2}), and $U$ is another independent random variable with distribution Bern(\nicefrac{1}{2}) that could potentially represent inner ability (inspired from existing works in causal fairness, e.g.,~\cite{dutta2021fairness}). Now, suppose that the admission decision happens to be based on a score given by: $\hat{Y}=X_1$.
\end{cexample}

Let us now introduce our first approach to quantify the contribution of the features $X_1$ and $X_2$ when we have access to the model. Inspired from literature in explainability, we can vary an individual feature while keeping the other features unchanged, and assign the resulting change in disparity (mutual information between $Z$ and the output) to this feature. So, in essence, the change due to $X_1$ in the observed disparity $\mut{Z}{\hat{Y}}$ can be quantified as:
$$ \mut{Z}{\hat{Y}(X)}- \mut{Z}{\hat{Y}(X\backslash X_1)}  ,$$ where $\hat{Y}(X_S)$ denotes the output of the model/decision-making system that is based only on the features in the set $X_S\subseteq X$. In existing works on explainability~\cite{lundberg2017unified}, $\hat{Y}(X_S)$ is typically computed by setting the input features in the set $X\backslash X_S$ as constants for all data-points (e.g., equal to their respective means), and then evaluating the output of the model. Notice that, $\hat{Y}(X)$ is thus essentially $\hat{Y}$ which is the actual model output, and $\hat{Y}(\emptyset)$ is a constant. Thus, $\mut{Z}{\hat{Y}(X)}=\mut{Z}{\hat{Y}}$ and $\mut{Z}{\hat{Y}(\emptyset)}=0$.

A Shapley-value-based approach (building on this idea) would take an average of the change due to $X_i$ across all possible subsets $X_S\subseteq X\backslash X_i$, leading to the following candidate measure:
\begin{candmeas}[Interventional Contribution]
\begin{align}
\mathrm{Contri}(X_i)=&\sum_{X_S\subseteq X\setminus X_i}\frac {|X_S|!\;(n-|X_S|-1)!}{n!}\nonumber \\& (\mut{Z}{\hat{Y}(X_S\cup X_i)}-\mut{Z}{\hat{Y}(X_S)}).
\end{align}
\label{candmeas:interventional}
\end{candmeas}
Using this candidate measure of contribution for this canonical example, we get:
\begin{align}
&\mathrm{Contri}(X_1)=1/2 \ \text{bits.}\\
&\mathrm{Contri}(X_2)=0 \ \text{bits.} \\ & \mathrm{Contri}(X_1)+\mathrm{Contri}(X_2)= \mut{Z}{\hat{Y}}=1/2 \text{ bits}.
\end{align}
This perspective is in alignment with the actual mechanism of how the decision is being made, which is based only on $X_1$. However, based on this explanation of contributions to disparity, the admissions committee could choose to drop the first feature, and retrain their model using the remaining features (here, $X_2$) in the hope that it would remove disparity. This would turn out to be misleading since, now, disparity can still arise from $X_2$. Thus, from a distributional perspective, we might want to capture this statistical redundancy between these two features $X_1$ and $X_2$, even if the second feature is not actually being used in the mechanism.


Furthermore, consider a scenario where an external observer does not have access to the actual mechanism of the model or how the decision is being made (also see Section~\ref{system_model}). They do not know if the decision is a deterministic function of the input $X$, or if the decision is made with human-in-the-loop and their subjective evaluation as well (since the human has access to both of the features along with additional non-quantified aspects). The observer only gets to observe the inputs to the model, and the final output decisions for a dataset. In such a situation, it is not possible to intervene on the inputs to the decision-making system and compare the outputs for different values of the input. In fact, even if one retrains the original model with $X_1$ and $X_2$ as inputs, it may not converge to the same weights because of the \emph{distributional redundancy} between the two features. Then, the question that for the observer is: \emph{what is the ``potential'' contribution of each individual feature to the observed disparity $\mut{Z}{\hat{Y}}$ when the exact decision-making mechanism is not accessible?}

For answering this question, in this work, we first examine an information-theoretic quantity called redundant information (recall Definition~\ref{defn:red} in Section~\ref{background:pid}). Redundant information captures the redundant statistical dependency about $Z$ present in the decision $\hat{Y}$ and a set of features $X_S$, and is written as $\red{Z}{(\hat{Y},X_S)}$.

A Shapley-value-based approach would take an average of the change in the redundant information due to $X_i$ across all possible subsets $X_S\subseteq X\backslash X_i$, leading to the following: 
\begin{candmeas}[Potential Contribution]
\label{candmeas:distributional}
\begin{align}
&\mathrm{PotentContri}(X_i)=\sum_{X_S\subseteq X\setminus X_i}\frac {|X_S|!\;(n-|X_S|-1)!}{n!}\nonumber \\ &(\red{Z}{(\hat{Y},X_S\cup X_i)}-\red{Z}{(\hat{Y},X_S)}).
\end{align}
\end{candmeas}
Now, let us see what $\mathrm{PotentContri}(X_i)$ signifies for this canonical example.  Notice that,
\begin{align}
&\red{Z}{(\hat{Y},X_1)}\overset{(a)}{=}\mut{Z}{\hat{Y}}-\uni{Z}{\hat{Y} \given X_1}\nonumber \\
&\overset{(b)}{=}\mut{Z}{\hat{Y}}=1/2 \text{ bits}.
\end{align}
Here (a) holds from \eqref{eq:pid2}, and (b) holds because $\uni{Z}{\hat{Y} \given X_1}\leq \mut{Z}{\hat{Y}\given X_1}=0$ (non-negativity of PID and \eqref{eq:pid2}; see Lemma 1 in Appendix). Because $\red{Z}{(\hat{Y},X_1)}$ is a purely distributional quantity, 
\begin{equation}
\red{Z}{(\hat{Y},X_2)}=\red{Z}{(\hat{Y},X_1)}=1/2\text{ bits}.
\end{equation}
We also have $\red{Z}{(\hat{Y},X)}=1/2$ bits, and $\red{Z}{(\hat{Y},\emptyset)}=0$ bits. This leads to:
\begin{align}
&\mathrm{PotentContri}(X_1)=1/2(\red{Z}{(\hat{Y},X)} \nonumber \\
&-\red{Z}{(\hat{Y},X_2)})+ 1/2(\red{Z}{(\hat{Y},X_1)}\nonumber \\
&-\red{Z}{(\hat{Y},\emptyset)}) =1/4 \ \text{bits.} \\ &\mathrm{PotentContri}(X_2)=1/2(\red{Z}{(\hat{Y},X)}\nonumber \\
&-\red{Z}{(\hat{Y},X_1)})+1/2(\red{Z}{(\hat{Y},X_2)} \nonumber \\
&-\red{Z}{(\hat{Y},\emptyset)}) =1/4 \ \text{bits.} \\ & \mathrm{PotentContri}(X_1)+\mathrm{PotentContri}(X_2) \nonumber \\
&= \mut{Z}{\hat{Y}}=1/2 \text{ bits}.
\end{align}

\subsection{Properties of our Distributional Measure of Contribution}
\label{subsec:property}
Our distributional approach to quantifying contributions can identify if any additional disparity has been introduced in the decision-making process from non-quantified aspects, e.g., due to human-in-the-loop who almost always have access to the protected attributes.

\begin{thm}[Unexplained Disparity]
In general, we have $\mathrm{PotentContri}(X_i) \leq \mut{Z}{\hat{Y}}.$ An equality holds if and only if $\uni{Z}{\hat{Y}\given X}=0$.
\label{thm:unexplained}
\end{thm}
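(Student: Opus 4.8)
The plan is to reduce the statement to two structural facts already available in the excerpt: the efficiency property of Shapley values (Property~\ref{propty:shapley}) and the PID identity~\eqref{eq:pid2}. Reading the claim in light of the title (``Unexplained Disparity''), the equality condition, and the worked canonical example where $\mathrm{PotentContri}(X_1)+\mathrm{PotentContri}(X_2)=\mut{Z}{\hat{Y}}$, I interpret the inequality as concerning the aggregate allocation $\sum_i \mathrm{PotentContri}(X_i)$. The first step is to recognize $\mathrm{PotentContri}$ as the Shapley allocation of the cooperative game whose players are the features $X_1,\ldots,X_n$ and whose characteristic function is $v(X_S)=\red{Z}{(\hat{Y},X_S)}$. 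This is a legitimate characteristic function: it satisfies $v(\emptyset)=\red{Z}{(\hat{Y},\emptyset)}=0$ (as already recorded in the excerpt), and the marginal-contribution term in Candidate Measure~\ref{candmeas:distributional} is exactly $v(X_S\cup X_i)-v(X_S)$. Hence Property~\ref{propty:shapley} applies verbatim and yields
\[
\sum_{i}\mathrm{PotentContri}(X_i)=v(X)=\red{Z}{(\hat{Y},X)}.
\]

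Next I would invoke the PID identity~\eqref{eq:pid2} with the substitution $A=\hat{Y}$ and $B=X$, which gives $\mut{Z}{\hat{Y}}=\uni{Z}{\hat{Y}\given X}+\red{Z}{(\hat{Y},X)}$, so that $\red{Z}{(\hat{Y},X)}=\mut{Z}{\hat{Y}}-\uni{Z}{\hat{Y}\given X}$. Combining this with the previous display produces the clean accounting identity
\[
\sum_{i}\mathrm{PotentContri}(X_i)=\mut{Z}{\hat{Y}}-\uni{Z}{\hat{Y}\given X}.
\]
The inequality $\sum_i\mathrm{PotentContri}(X_i)\le\mut{Z}{\hat{Y}}$ is then immediate from the non-negativity of the PID term $\uni{Z}{\hat{Y}\given X}$, and equality holds precisely when the subtracted term vanishes, i.e.\ $\uni{Z}{\hat{Y}\given X}=0$, which is exactly the claimed condition. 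Interpretively, I would emphasize that the gap $\uni{Z}{\hat{Y}\given X}$ is the unique information about $Z$ carried by the decision $\hat{Y}$ that no feature in $X$ can account for -- the ``unexplained'' disparity injected by non-quantified aspects such as a human-in-the-loop with direct access to $Z$.

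I do not anticipate a genuine obstacle: both ingredients are in place, so the only care required is bookkeeping -- verifying that $v$ is a bona fide characteristic function (in particular $v(\emptyset)=0$) and that the weights in Candidate Measure~\ref{candmeas:distributional} coincide termwise with the Shapley weights, so that Property~\ref{propty:shapley} can be applied without modification. The single point worth flagging is the scope of the summation: the statement, the equality characterization, and the canonical example all make sense only at the level of the aggregate allocation $\sum_i\mathrm{PotentContri}(X_i)$, and I would state this explicitly at the outset of the write-up so that the role of the efficiency property is unambiguous.
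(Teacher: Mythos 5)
Your proof is correct and follows exactly the same route as the paper's: Shapley efficiency (Property~\ref{propty:shapley}) gives $\sum_i \mathrm{PotentContri}(X_i)=\red{Z}{(\hat{Y},X)}$, and the PID identity~\eqref{eq:pid2} plus non-negativity of $\uni{Z}{\hat{Y}\given X}$ gives the bound and the equality condition. Your reading of the statement as concerning the aggregate $\sum_i \mathrm{PotentContri}(X_i)$ is also the one the paper's own proof adopts, so flagging that interpretive point was the right call.
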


\begin{proof}
In general, we have, $$\sum_{X_i\in X}\mathrm{PotentContri}(X_i)= \red{Z}{(\hat{Y},X)}$$ (property of Shapley values; see Property~\ref{propty:shapley} in Section~\ref{background:shapley}). Now, observe that,
\begin{align*}
\red{Z}{(\hat{Y},X)}=\mut{Z}{\hat{Y}}-\uni{Z}{\hat{Y}\given X} \overset{(a)}{\leq} \mut{Z}{\hat{Y}},
\end{align*}
where (a) holds from the non-negativity of PID terms. An equality holds if and only if $\uni{Z}{\hat{Y}\given X}=0$.
\end{proof}

\begin{rem}[Significance of Theorem~\ref{thm:unexplained}]
Since the observer does not have access to the model, they do not know if the decision $\hat{Y}$ is an entirely deterministic function of the input $X$ (or not, e.g., because of human-in-the-loop). When $\hat{Y}$ is entirely determined by input $X$, the Markov chain $Z-X-\hat{Y}$ holds, implying $\uni{Z}{\hat{Y}\given X}=0$ (Lemma 1 in Appendix). The sum of the individual contributions of all the features also add up to the observed disparity $\mut{Z}{\hat{Y}}$. However, when $\hat{Y}$ is not an entirely deterministic function of the input $X$,  $\uni{Z}{\hat{Y}\given X}$ may be non-zero. This signifies that there is some unique information about $Z$ in the output $\hat{Y}$ that cannot be attributed to the input features $X$, implying additional unexplained disparity introduced in the decision-making process.
\end{rem}

\begin{thm}[Bounds on PotentContri]
For any feature $X_i$, we have $0 \leq \mathrm{PotentContri}(X_i)\leq \mut{Z}{\hat{Y}}$. 
\end{thm}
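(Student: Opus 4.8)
The plan is to derive the upper bound as an easy consequence of the lower bound, so that the real work is establishing non-negativity. Indeed, if we know $\mathrm{PotentContri}(X_j)\geq 0$ for every feature, then by the efficiency property of Shapley values (Property~\ref{propty:shapley}) the contributions sum to $\red{Z}{(\hat{Y},X)}$, and by Theorem~\ref{thm:unexplained} this sum is at most $\mut{Z}{\hat{Y}}$; since each term is non-negative, any single $\mathrm{PotentContri}(X_i)$ is bounded above by the whole sum, hence by $\mut{Z}{\hat{Y}}$. So I would dispatch the upper bound in one line once the lower bound is in hand.

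For the lower bound, I would exploit that $\mathrm{PotentContri}(X_i)$ is the Shapley value of the cooperative game with characteristic function $v(X_S)=\red{Z}{(\hat{Y},X_S)}$, whose coalition weights $|X_S|!\,(n-|X_S|-1)!/n!$ are non-negative. It therefore suffices to show that this game is \emph{monotone}, i.e. that every marginal contribution $\red{Z}{(\hat{Y},X_S\cup X_i)}-\red{Z}{(\hat{Y},X_S)}$ is non-negative. Using the PID identity \eqref{eq:pid2} in the form $\red{Z}{(\hat{Y},X_S)}=\mut{Z}{\hat{Y}}-\uni{Z}{\hat{Y}\given X_S}$ (with $\mut{Z}{\hat{Y}}$ constant in $X_S$), monotonicity of redundancy is \emph{equivalent} to the claim that unique information decreases as its reference set grows, namely $\uni{Z}{\hat{Y}\given X_S\cup X_i}\leq \uni{Z}{\hat{Y}\given X_S}$.

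To prove this last inequality I would argue by an explicit extension of the optimizing distribution. Let $Q^\ast\in\Delta_p$ attain $\uni{Z}{\hat{Y}\given X_S}=\min_{Q}\mathrm{I}_Q(Z;\hat{Y}\given X_S)$ (Definition~\ref{defn:uni}), so that in particular $Q^\ast$ matches the true $(Z,X_S)$ and $(Z,\hat{Y})$ marginals. I would then glue $X_i$ onto $Q^\ast$ through its \emph{true} conditional, defining $Q'(\hat{y},z,x_S,x_i)=Q^\ast(\hat{y},z,x_S)\,\Pr(x_i\mid z,x_S)$; equivalently, under $Q'$ one has $\hat{Y}\perp X_i \mid (Z,X_S)$. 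A short check shows $Q'$ reproduces the true $(Z,\hat{Y})$ marginal and the true $(Z,X_S,X_i)$ marginal, so $Q'$ is a feasible point of the constraint set $\Delta_p$ for the enlarged problem. Finally, the interaction-information identity
\[
\mut{Z}{\hat{Y}\given X_S}-\mut{Z}{\hat{Y}\given (X_S,X_i)}=\mut{\hat{Y}}{X_i\given X_S}-\mut{\hat{Y}}{X_i\given (X_S,Z)},
\]
evaluated under $Q'$, has a vanishing last term (this is exactly the conditional independence built into $Q'$), so $\mathrm{I}_{Q'}(Z;\hat{Y}\given (X_S,X_i))=\mathrm{I}_{Q^\ast}(Z;\hat{Y}\given X_S)-\mathrm{I}_{Q'}(\hat{Y};X_i\given X_S)\leq \uni{Z}{\hat{Y}\given X_S}$. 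Since $\uni{Z}{\hat{Y}\given X_S\cup X_i}$ is the minimum of $\mathrm{I}_Q(Z;\hat{Y}\given(X_S,X_i))$ over all feasible $Q$, it is no larger than its value at $Q'$, giving the desired inequality.

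I expect the main obstacle to be the gluing step: the two unique-information optimizations live over \emph{different} marginal-constraint polytopes (one fixing $(Z,X_S)$, the other fixing $(Z,X_S,X_i)$), so one cannot simply compare their minima. The device that resolves this is appending $X_i$ via its true conditional $\Pr(x_i\mid z,x_S)$, which is precisely what makes both required marginals of $Q'$ come out correct \emph{and} forces $\mut{\hat{Y}}{X_i\given(X_S,Z)}=0$, collapsing the identity above to a one-sided bound. Verifying feasibility of $Q'$ and the sign of the residual term $\mut{\hat{Y}}{X_i\given X_S}$ is routine, and everything else (the reduction to monotonicity and the derivation of the upper bound from Theorem~\ref{thm:unexplained}) is immediate.
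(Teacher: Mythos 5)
Your proof is correct, and its skeleton coincides with the paper's: the upper bound follows from non-negativity plus the efficiency property and Theorem~\ref{thm:unexplained}, and the lower bound reduces to monotonicity of $\red{Z}{(\hat{Y},X_S)}$ in $X_S$, which via $\red{Z}{(\hat{Y},X_S)}=\mut{Z}{\hat{Y}}-\uni{Z}{\hat{Y}\given X_S}$ is equivalent to monotonicity of unique information. The one place you diverge is that the paper simply \emph{cites} this last fact (Lemma~\ref{lem:monotonicity_uni}, attributed to Lemma~32 of \cite{banerjee2018unique}), whereas you prove it from Definition~\ref{defn:uni}. Your argument is sound: the glued distribution $Q'(\hat{y},z,x_S,x_i)=Q^{\ast}(\hat{y},z,x_S)\Pr(x_i\given z,x_S)$ does reproduce the true $(Z,\hat{Y})$ and $(Z,X_S,X_i)$ marginals (the latter because $Q^{\ast}$ matches the true $(Z,X_S)$ marginal), so it is feasible for the enlarged problem; the built-in conditional independence kills $\mathrm{I}_{Q'}(\hat{Y};X_i\given X_S,Z)$, and the interaction-information identity then gives $\mathrm{I}_{Q'}(Z;\hat{Y}\given X_S,X_i)\leq \mathrm{I}_{Q^{\ast}}(Z;\hat{Y}\given X_S)$, which upper-bounds the new minimum. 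This buys self-containedness at the cost of relying on finiteness of the alphabets (so the minimum is attained and $\Pr(x_i\given z,x_S)$ is definable, with the zero-probability cases handled arbitrarily), which is the setting the paper implicitly assumes; the paper's version buys brevity by outsourcing exactly this step.
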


\begin{proof}
For the lower-bound, it suffices to show that $$\red{Z}{(\hat{Y},X_S\cup X_i)} \geq \red{Z}{(\hat{Y},X_S)}$$ for any set $X_S \subseteq X \backslash X_i$. Note that,
\begin{align}
&\red{Z}{(\hat{Y},X_S\cup X_i)} \nonumber \\
&= \mut{Z}{\hat{Y}}- \uni{Z}{\hat{Y}\given X_S\cup X_i }\\
& \overset{(a)}{\geq} \mut{Z}{\hat{Y}}- \uni{Z}{\hat{Y}\given X_S } \\
& = \red{Z}{(\hat{Y},X_S)}.
\end{align}
Here (a) holds from a property of unique information (see Lemma 2 in Appendix).

For the upper bound, notice that 
\begin{align}
& \mathrm{PotentContri}(X_i) \overset{(a)}{\leq} \sum_{X_i\in X}\mathrm{PotentContri}(X_i) \nonumber \\
& \overset{(b)}{=} \red{Z}{(\hat{Y},X)} \overset{(c)}{\leq} \mut{Z}{\hat{Y}}.
\end{align}
Here (a) holds from non-negativity of $\mathrm{PotentContri}(X_i)$ (proved just above), (b) holds from Theorem~\ref{thm:unexplained}, and (c) holds since PID is a non-negative decomposition.
\end{proof}

Next, we examine more canonical examples to better understand the \emph{interventional} and \emph{distributional} approaches.

\subsection{More Canonical Examples}
\label{subsec:more_examples}

\begin{cexample}[Disparity Amplification]
{Consider two features: score in two courses, $X_1=Z+U$ and $X_2=U$. Here $Z$ is the protected attribute, distributed as Bern(\nicefrac{1}{2}), and $U\sim$ Bern(\nicefrac{1}{2}) is another independent random variable (e.g. representing inner ability). Now, suppose  the decision is: $\hat{Y}=X_1- X_2=Z$.}
\end{cexample}

This is a case of disparity amplification. Here, though $X_2$ does not individually have any information about $Z$ (indeed $\mut{Z}{X_2}=0$), it leads to an increase in the observed disparity in the final output $\hat{Y}$. We have $\mut{Z}{\hat{Y}}=1\text{ bit}$ which would not have been possible from using the feature $X_1$ alone, since $\mut{Z}{X_1}$ is much less. Thus, we would want $X_2$ to also be assigned a contribution to the overall disparity.

For this example, the interventional perspective to quantifying contributions would lead to the following explanations:
\begin{align}
&\mathrm{Contri}(X_1)=3/4 \ \text{bits.}\\
&\mathrm{Contri}(X_2)=1/4 \ \text{bits.} \\ & \mathrm{Contri}(X_1)+\mathrm{Contri}(X_2)= \mut{Z}{\hat{Y}}=1 \text{ bit}.
\end{align}
Interestingly, the distributional perspective to quantifying ``potential'' contributions would also lead to the same:
\begin{align}
&\mathrm{PotentContri}(X_1)=3/4 \ \text{bits.}\\
&\mathrm{PotentContri}(X_2)=1/4 \ \text{bits.} \\ & \mathrm{PotentContri}(X_1)+\mathrm{PotentContri}(X_2)= \mut{Z}{\hat{Y}} \nonumber \\
&=1 \text{ bit}.
\end{align}

Thus, both the approaches are able to quantify the contribution of the individual features to the observed disparity, as desired.

\begin{cexample}[Disparity Masking]
{Let $Z\sim$ Bern(\nicefrac{1}{2}) be the protected attribute, and $U\sim$ Bern(\nicefrac{1}{2}) be another independent random variable  (e.g. representing inner ability). Consider two features: $X_1=Z$ (the protected attribute) and $X_2=U$ (score in a course).  Now, suppose that the admission decision is: $\hat{Y}=X_1\oplus X_2=Z\oplus U$. }
\end{cexample}

This is a case of masked discrimination: high-scoring candidates of the non-protected group and low-scoring candidates of the protected group are admitted. From an observational perspective, there is no disparity in the overall decisions. This is also exhibited by the fact that $\mut{Z}{\hat{Y}}=0$. However, a careful examination using interventions reveal that the decisions are discriminatory towards high-scoring candidates of the protected group who are not admitted.

For this example, the interventional perspective leads to the following contributions:
\begin{align}
&\mathrm{Contri}(X_1)=1/2 \ \text{bits and }\mathrm{Contri}(X_2)=-1/2 \ \text{bits.} \nonumber \\ & \mathrm{Contri}(X_1)+\mathrm{Contri}(X_2)= \mut{Z}{\hat{Y}}=0 \text{ bits}.
\end{align}
On the other hand, the distributional perspective for  ``potential'' contributions leads to the following:
\begin{align}
&\mathrm{PotentContri}(X_1)=0 \ \text{bits.}\\
&\mathrm{PotentContri}(X_2)=0 \ \text{bits.} \\ & \mathrm{PotentContri}(X_1)+\mathrm{PotentContri}(X_2) 
= \mut{Z}{\hat{Y}}=0 \text{ bits}.
\end{align}

\begin{rem}[On Negative Contribution] We note that while $\mathrm{PotentContri}$ is always non-negative, $\mathrm{Contri}$ can take negative values. Here, a negative value of $\mathrm{Contri}$ demonstrates that a feature ($X_2$), in fact, reduces (effectively masks) the overall disparity that was introduced by another feature ($X_1$).
\end{rem}

\begin{figure}
\centering
\includegraphics[height=5cm]{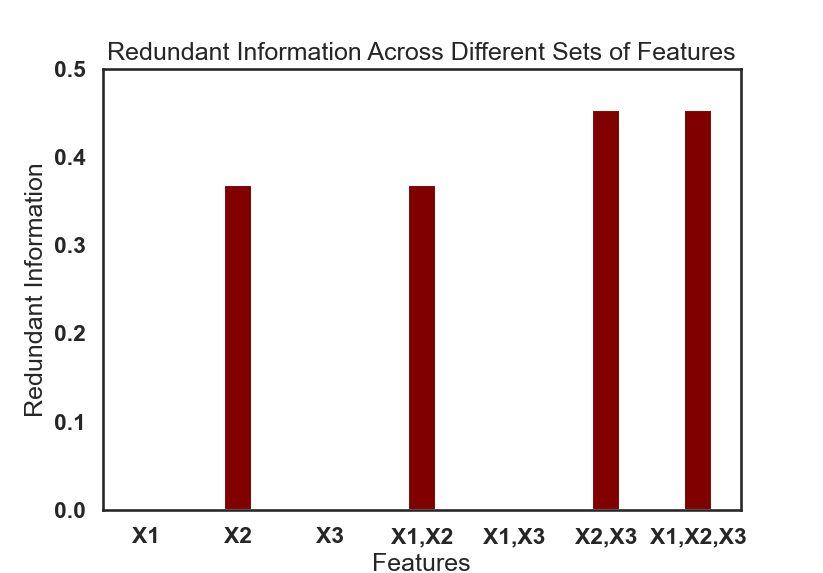}
\caption{\small{Understanding redundant information about gender $Z$ in $\hat{Y}$ and subset $S\subseteq (X_1,X_2,X_3)$ across different subsets of features, i.e., $\red{Z}{(\hat{Y},S)}$: Notice that, the features $(X_2,X_3)$ jointly have higher redundant information about $Z$ with $\hat{Y}$ than either of the features alone. \label{fig:redundant} }}
\end{figure}

\section{CASE STUDY}
\label{sec:case_study}
Here, we include a simple case study on an artificial dataset to demonstrate how to compute potential contributions of features on datasets.

\begin{example} We consider a scenario of college admissions with three features: discrete scores corresponding to the GPA, GRE, and Recommendation Letters. Let $Z \sim$ Bern(\nicefrac{1}{2}) denote gender, $X_1=U_1\sim$ Bern(\nicefrac{9}{10}) denote whether GPA is above a desirable threshold, $X_2=Z+U_2$ (with $U_2\sim$ Bern(\nicefrac{1}{2})) denote a thresholded GRE score, and $X_3=U_2$  denote a score from reference letters. Here, $U_1$, and $U_2$ can be thought of as latent random variables denoting inner abilities of a candidate that are independent of $Z$. Suppose that the admission committee uses an automated figure of merit $F=Z+U_1+2U_2$ whose exact mechanism is not known to us (could be something like $F=X_1+X_2+X_3$ or $F=X_1+2X_2-X_3$). We can only observe the final decisions based on $\hat{Y}= F+R = Z+U_1+2U_2+N$ where $N\sim$ Bern(\nicefrac{1}{10}) denotes a manual evaluation score (which might be subjective and not a deterministic function of the inputs $X_1,X_2,X_3$).
\end{example}

In  Fig.~\ref{fig:redundant} and Fig.~\ref{fig:contri}, we employ our technique to quantify the potential contribution of each individual feature to the overall disparity $\mut{Z}{\hat{Y}}$. We use the discrete information theory (\texttt{dit})~\cite{dit} package to compute all of the PID terms. The package solves a convex optimization problem for the computation of unique information using the empirical distribution of the data ($10$K samples). 

\begin{rem}[Early Truncation for Scalability] We note that there are computational challenges as we scale this technique to larger number of features. To address this, we employ an approximation that is based on the following critical observation: $\red{Z}{(\hat{Y},X_S)}$ is non-decreasing as the number of features in the set $X_S$ increases (Lemma 3 in Appendix), and is upper bounded by $\mut{Z}{\hat{Y}}$. Thus, if we arrive at a value of $\red{Z}{(\hat{Y},X_S)}$ such that $|\red{Z}{(\hat{Y},X_S)}-\mut{Z}{\hat{Y}}|\leq \epsilon$ for a small choice of $\epsilon$, then we can approximate $|\red{Z}{(\hat{Y},X'_S\cup X_i)}-\red{Z}{(\hat{Y},X'_S)}|$ with $\epsilon$ for all supersets $X'_S \supseteq X_S$. In practice, this can reduce the number of sets $X_S$ on which we need to compute $\red{Z}{(\hat{Y},X_S)}$. 
\end{rem}

\begin{figure}
\centering
\includegraphics[height=5cm]{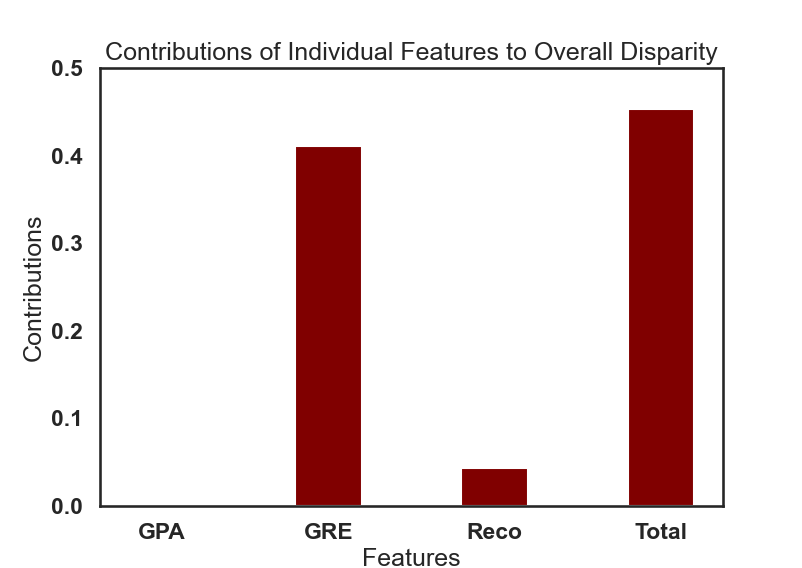}
\caption{\small{Quantifying Potential Contribution of Individual Features to the Overall Disparity $\mut{Z}{\hat{Y}}$ based on Redundant Information: Notice that, the feature Reco ($X_3$) is also assigned a potential contribution since $(X_2,X_3)$  jointly have a higher redundant information about $Z$ with $\hat{Y}$ than either of them alone.
\label{fig:contri}  }}
\end{figure}


\section{CONCLUSION AND BROADER IMPACT}

This work proposes two novel approaches to quantifying contribution of individual features to the overall disparity ($\mut{Z}{\hat{Y}}$) in a decision, and discusses when either is better suited. These techniques could be applicable for auditing decision-making mechanisms: when one is able to intervene or not intervene on the inputs, as well as in scenarios with human-in-the-loop. Future work will examine how these approaches can inform  intervention and repair of the decision-making mechanism to reduce disparity.


We note that the computational complexity of computing Shapley values definitely becomes a challenge for complex neural networks with very large number of features/neurons that are used in computer vision or natural language processing. In future work, we will also explore techniques of approximating Shapley values for faster computation, e.g., in \cite{ghorbani2020neuron}. However, we envision our technique to still be applicable for smaller models used in more consequential applications, e.g., college admissions, or credit decision that use simple models with a few features and human-in-the-loop~\cite{student_risk}.

Another future work could be improving the estimation of these quantities for continuous random variables (either by discretizing them~\cite{dutta2021fairness} or leveraging techniques in \cite{mukherjee2019ccmi,galhotra2020fair,venkatesh2021can,schamberg2021partial,pakman2021estimating} for the ease of computation).

\small{
\bibliographystyle{IEEEtran}
\bibliography{aaai22}
}

\section*{Disclaimer}
This paper was prepared for informational purposes and is not a product of the Research Department of J.P. Morgan. J.P. Morgan makes no representation and warranty whatsoever and disclaims all liability, for the completeness, accuracy or reliability of the information contained herein. This document is not intended as investment research or investment advice, or a recommendation, offer or solicitation for the purchase or sale of any security, financial instrument, financial product or service, or to be used in any way for evaluating the merits of participating in any transaction, and shall not constitute a solicitation under any jurisdiction or to any person, if such solicitation under such jurisdiction or to such person would be unlawful.

\section*{Appendix}

\begin{lem}[Zero Unique Information]
When the Markov chain $Z-X-\hat{Y}$ holds, we have $\uni{Z}{\hat{Y}\given X}=0$. 
\label{lem:markov}
\end{lem}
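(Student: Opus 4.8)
The plan is to combine the non-negativity of the PID terms with the variational characterization of unique information in Definition~\ref{defn:uni}. First I would note that, since the partial information decomposition is a non-negative decomposition, we automatically have $\uni{Z}{\hat{Y}\given X}\geq 0$. Hence the entire content of the lemma reduces to establishing the reverse inequality $\uni{Z}{\hat{Y}\given X}\leq 0$.

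The key observation is that the \emph{true} joint distribution $P$ of $(Z,\hat{Y},X)$ is itself a feasible point of the minimization defining the unique information. Indeed, $\Delta_p$ is the set of all joint laws sharing the true marginals on $(Z,\hat{Y})$ and on $(Z,X)$, and the true distribution trivially matches its own pairwise marginals, so $P\in\Delta_p$. Therefore the minimum is bounded above by the objective evaluated at $P$:
\begin{align*}
\uni{Z}{\hat{Y}\given X} = \min_{Q\in\Delta_p}\mathrm{I}_{Q}(Z;\hat{Y}\given X) \leq \mathrm{I}_{P}(Z;\hat{Y}\given X).
\end{align*}
Finally I would invoke the hypothesis: the Markov chain $Z-X-\hat{Y}$ is precisely the statement that $Z$ and $\hat{Y}$ are conditionally independent given $X$ under the true law $P$, i.e. $\mathrm{I}_{P}(Z;\hat{Y}\given X)=0$. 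Chaining the three facts yields $0\leq \uni{Z}{\hat{Y}\given X}\leq 0$, forcing equality to zero.

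There is no substantive obstacle here, so the ``hard part'' is really just bookkeeping: one must match the conditioning slots correctly (with $\hat{Y}$ in the primary position and $X$ in the conditioning position, so that $\uni{Z}{\hat{Y}\given X}=\min_{Q\in\Delta_p}\mathrm{I}_{Q}(Z;\hat{Y}\given X)$), and one must recognize that feasibility of $P$ requires only that it agree with itself on the relevant pairwise marginals, which is immediate. The crux of the argument is simply that the Markov structure makes the conditional mutual information vanish at a feasible point, while non-negativity pins the minimum from below at the same value.
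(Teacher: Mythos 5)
Your proof is correct, and it arrives at the same pivotal inequality as the paper---namely $\uni{Z}{\hat{Y}\given X}\leq \mut{Z}{\hat{Y}\given X}$---but justifies that inequality by a genuinely different route. The paper never touches the variational definition: it invokes the PID consistency identity $\mut{Z}{\hat{Y}\given X}=\uni{Z}{\hat{Y}\given X}+\syn{Z}{(\hat{Y},X)}$ from \eqref{eq:pid3} together with non-negativity of the synergy term, so its argument goes through for \emph{any} non-negative PID satisfying the standard identities, not only the one in Definition~\ref{defn:uni}. You instead exploit the specific minimization in Definition~\ref{defn:uni}, observing that the true joint law is a feasible point (it trivially matches its own pairwise marginals) and that the objective vanishes there under the Markov chain. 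Your route is equally valid and arguably more self-contained---it needs neither the synergy term nor the consistency equations, only non-negativity of conditional mutual information---at the cost of being tied to the particular optimization-based definition of unique information used here. Both arguments then finish identically by pinning the quantity between $0$ and $0$.
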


From the definition of PID, we have, $\mut{Z}{\hat{Y} \given X}=\uni{Z}{\hat{Y}| X} + \syn{Z}{(\hat{Y}, X)}$ where both the terms are non-negative. Thus,
$\uni{Z}{\hat{Y}| X} \leq \mut{Z}{\hat{Y} \given X}$. Notice that, when the Markov chain $Z-X-\hat{Y}$ holds, we have $\mut{Z}{\hat{Y}\given X}=0$, thus implying $\uni{Z}{\hat{Y}| X} =0$ as well.

\begin{lem}[Monotonicity of Unique Information]
For all $(Z,B,X_c,X_c')$, we have:
$$\uni{Z}{B| X_c \cup X_c'} \leq \uni{Z}{B| X_c}   .$$
\label{lem:monotonicity_uni}
\end{lem}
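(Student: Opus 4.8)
The plan is to prove the inequality directly from Definition~\ref{defn:uni}. Writing $P$ for the true joint distribution of $(Z,B,X_c,X_c')$, I would unfold both sides: $\uni{Z}{B| X_c}=\min_{Q\in\Delta_p}\mathrm{I}_{Q}(Z;B\given X_c)$, where $\Delta_p$ collects the distributions on $(Z,B,X_c)$ that match the $(Z,B)$ and $(Z,X_c)$ marginals of $P$; and $\uni{Z}{B| X_c\cup X_c'}=\min_{Q\in\Delta_p'}\mathrm{I}_{Q}(Z;B\given X_c,X_c')$, where $\Delta_p'$ collects the distributions on $(Z,B,X_c,X_c')$ that match the $(Z,B)$ and $(Z,X_c,X_c')$ marginals of $P$. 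Since I must bound a minimum taken over the \emph{larger} conditioning set from above by a minimum over the smaller one, the natural route is to take an optimizer $Q^*$ achieving $\uni{Z}{B| X_c}$ and lift it to a single \emph{feasible} point of the $\Delta_p'$-optimization whose objective is no larger.

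The construction I would use is $\tilde Q(z,b,x_c,x_c')=Q^*(z,b,x_c)\,P(x_c'\given z,x_c)$, i.e. re-attach $X_c'$ to the optimal $(Z,B,X_c)$-distribution through the true conditional channel $P(x_c'\given z,x_c)$. Feasibility ($\tilde Q\in\Delta_p'$) is a direct marginalization check: summing out $B$ gives the $(Z,X_c,X_c')$ marginal $Q^*(z,x_c)\,P(x_c'\given z,x_c)=P(z,x_c)\,P(x_c'\given z,x_c)=P(z,x_c,x_c')$ (using that $Q^*$ matches the $(Z,X_c)$ marginal), and summing out $(X_c,X_c')$ returns the $(Z,B)$ marginal $Q^*(z,b)=P(z,b)$. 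Moreover, summing out $X_c'$ alone recovers $Q^*$ on $(Z,B,X_c)$, so $\mathrm{I}_{\tilde Q}(Z;B\given X_c)=\mathrm{I}_{Q^*}(Z;B\given X_c)=\uni{Z}{B| X_c}$, and by construction $\tilde Q$ makes $B$ conditionally independent of $X_c'$ given $(Z,X_c)$.

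It then remains to show $\mathrm{I}_{\tilde Q}(Z;B\given X_c,X_c')\le \mathrm{I}_{\tilde Q}(Z;B\given X_c)$, after which minimality over $\Delta_p'$ closes the argument. Expanding $\mathrm{I}_{\tilde Q}(Z;(B,X_c')\given X_c)$ by the chain rule in two orders gives $\mathrm{I}_{\tilde Q}(Z;B\given X_c,X_c')-\mathrm{I}_{\tilde Q}(Z;B\given X_c)=\mathrm{I}_{\tilde Q}(Z;X_c'\given B,X_c)-\mathrm{I}_{\tilde Q}(Z;X_c'\given X_c)$, so it suffices to verify $\mathrm{I}_{\tilde Q}(Z;X_c'\given B,X_c)\le \mathrm{I}_{\tilde Q}(Z;X_c'\given X_c)$; this in turn follows from the conditional independence $\mathrm{I}_{\tilde Q}(B;X_c'\given Z,X_c)=0$ by expanding $\mathrm{I}_{\tilde Q}(X_c';(Z,B)\given X_c)$ in its two orders and discarding a nonnegative term. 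I expect the main obstacle to be exactly this last step: the lifted distribution must simultaneously respect the \emph{finer} marginal constraint $(Z,X_c,X_c')$ demanded by $\Delta_p'$ and guarantee that enlarging the conditioning from $X_c$ to $(X_c,X_c')$ does not raise the conditional mutual information---two requirements that pull in opposite directions, and which are reconciled only because $X_c'$ is attached through the genuine channel $P(x_c'\given z,x_c)$ rather than an arbitrary one.
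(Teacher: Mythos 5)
Your proof is correct and complete. Note that the paper does not actually supply an argument for this lemma---it simply cites \cite[Lemma 32]{banerjee2018unique}---so your write-up fills in the proof that the paper outsources, and it does so by what is essentially the standard route for such monotonicity statements about $\mathrm{Uni}$: since the feasible set for $\uni{Z}{B|X_c\cup X_c'}$ constrains the $(Z,X_c,X_c')$ marginal while the feasible set for $\uni{Z}{B|X_c}$ constrains only the $(Z,X_c)$ marginal, one cannot compare the two minimizations directly, and the right move is exactly your lifting: take an optimizer $Q^*$ of the smaller problem (the minimum is attained because, for finite alphabets, $\Delta_p$ is compact and the objective continuous) and extend it by the true channel $P(x_c'\given z,x_c)$. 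Your feasibility check is right, the identity $\mathrm{I}_{\tilde Q}(Z;B\given X_c)=\mathrm{I}_{Q^*}(Z;B\given X_c)$ follows correctly from $\tilde Q$ marginalizing back to $Q^*$, and the final inequality $\mathrm{I}_{\tilde Q}(Z;B\given X_c,X_c')\le \mathrm{I}_{\tilde Q}(Z;B\given X_c)$ is correctly reduced, via the two chain-rule expansions, to the built-in conditional independence $\mathrm{I}_{\tilde Q}(B;X_c'\given Z,X_c)=0$. The only cosmetic gap is that $P(x_c'\given z,x_c)$ is undefined when $P(z,x_c)=0$; since $Q^*(z,x_c)=P(z,x_c)=0$ there, any choice of conditional works, and a one-line remark to that effect would make the argument airtight.
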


This result is derived in  \cite[Lemma 32]{banerjee2018unique}. 

\begin{lem}[Monotonicity of Redundant Information]
For all $(Z,B,X_c,X_c')$, we have:
$$\red{Z}{(B, X_c \cup X_c')} \geq \red{Z}{(B, X_c)}   .$$
\label{lem:monotonicity_red}
\end{lem}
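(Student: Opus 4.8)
The plan is to reduce the statement directly to the Monotonicity of Unique Information (Lemma~\ref{lem:monotonicity_uni}) by rewriting each redundant-information term through Definition~\ref{defn:red}. The key move is to exploit the symmetry of redundant information, treating $B$ (rather than the growing set of features) as the quantity whose unique information is isolated.

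First I would apply Definition~\ref{defn:red} in the symmetric form
\[
\red{Z}{(B, X_c)} = \mut{Z}{B} - \uni{Z}{B| X_c}, \qquad \red{Z}{(B, X_c \cup X_c')} = \mut{Z}{B} - \uni{Z}{B| X_c \cup X_c'}.
\]
This is legitimate because $\red{Z}{(\cdot, \cdot)}$ is symmetric in its two arguments (as noted after \eqref{eq:pid1}), so $B$ may be placed in the role of the first argument in Definition~\ref{defn:red} while the feature set plays the role of the second.

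Next I would subtract the two expressions. The common term $\mut{Z}{B}$ cancels, leaving
\[
\red{Z}{(B, X_c \cup X_c')} - \red{Z}{(B, X_c)} = \uni{Z}{B| X_c} - \uni{Z}{B| X_c \cup X_c'}.
\]
Finally, invoking Lemma~\ref{lem:monotonicity_uni} with the comparison set enlarged from $X_c$ to $X_c \cup X_c'$ yields $\uni{Z}{B| X_c \cup X_c'} \leq \uni{Z}{B| X_c}$, so the right-hand side is non-negative and the claim follows.

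There is essentially no obstacle here: this is the same argument already used to establish the lower bound in the Bounds on PotentContri theorem (with $\hat{Y}$ in the role of $B$ and $X_S$ in the role of $X_c$), now stated for a general $B$. The only point requiring care is to apply the symmetric form of Definition~\ref{defn:red} so that $B$ is held fixed while the feature set grows, which is precisely the configuration in which Lemma~\ref{lem:monotonicity_uni} applies.
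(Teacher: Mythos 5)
Your proof is correct and is essentially identical to the paper's: both rewrite each redundant-information term via $\red{Z}{(B,\cdot)}=\mut{Z}{B}-\uni{Z}{B\given \cdot}$ and then invoke Lemma~\ref{lem:monotonicity_uni}. (The appeal to symmetry is not even needed, since $B$ already occupies the first argument slot of Definition~\ref{defn:red} as the lemma is stated.)
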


The proof holds from Lemma 2, since $\uni{Z}{B| X_c \cup X_c'} + \red{Z}{(B, X_c \cup X_c')}= \mut{Z}{B} = \uni{Z}{B| X_c} + \red{Z}{(B, X_c)}.$

\end{document}